\newtheorem{remark}{Remark}
\newtheorem{example}{Example}
\newtheorem{assumption}{Assumption}
\newtheorem{proposition}{Proposition}
\newtheorem{definition}{Definition}
\definecolor{mygray}{gray}{0.8}
\title{\LARGE \bf Approximate Bisimulation Relations for Neural Networks and Application to Assured Neural Network Compression}
\author{
Weiming Xiang,~\IEEEmembership{Senior Member, IEEE} and Zhongzhu Shao
\thanks{W. Xiang is with the School of Computer and Cyber Sciences, Augusta University, Augusta GA 30912 USA. Email: {\tt\small wxiang@augusta.edu}}      
\thanks{Z. Shao is with Department of Electrical Engineering, Southwest Jiaotong University, Chengdu, China.}
}
\begin{document}

\maketitle
\thispagestyle{empty}
\pagestyle{empty}

\begin{abstract}
\boldmath
In this paper, we propose a concept of approximate bisimulation relation for feedforward neural networks. In the framework of approximate bisimulation relation, a novel neural network merging method is developed to compute the approximate bisimulation error between two neural networks based on reachability analysis of neural networks. The developed method is able to quantitatively measure the distance between the outputs of two neural networks with same inputs. Then, we apply the approximate bisimulation relation results to perform neural networks model reduction and compute the compression precision, i.e.,  assured neural networks compression. At last, using the assured neural network compression, we accelerate the verification processes of ACAS Xu neural networks to illustrate the effectiveness and advantages of our proposed  approximate bisimulation approach.
\end{abstract}

\section{Introduction}

Deep neural networks (DNN) are now widely used in a variety of contemporary applications, such as image processing \cite{Litjens2017}, pattern recognition \cite{Schmidhuber2015,Lawrence1997}, adaptive control, \cite{Hunt1992,Tong2016} autonomous vehicles \cite{bojarski2016end}, and other fields, showing the powerful capabilities solving complex and challenging problems that traditional approaches fail to deal with. To cope with complex tasks and different environments, neural network models have been being developed with increasing scale and complexity, which aim to provide better performance and higher accuracy. However, the increasing scale and complexity of the neural network models also mean that neural networks require a large amount of resources for real-world implementation such as higher memory, more computational power, and higher energy consumption \cite{Simon2020}. Therefore, neural network model compression methods were developed to reduce the complexity of neural networks at the least possible price of performance deterioration. For instance, in \cite{Yabo2019}, four compression methods for deep convolutional neural networks are summarized, but some problems were pointed out such as a shxarp drop in the accuracy of the network when compressing. More neural network compression results can be found in the recent survey \cite{deng2020model} and references therein. Moreover, it has been observed that well-trained neural networks on abundant data are sometimes sensitive to updates, and react in unexpected and incorrect ways to even slight changes of the parameters \cite{szegedy2013intriguing}. The neural network compression inevitably introduces changes to the neural network. Therefore, an approach is needed to formally characterize the changes between the original neural network model and its compressed version. 

In this paper, we propose an approximate bisimulation relation between two neural networks, which formally characterize the maximal difference between the outputs of two neural networks generated from same inputs. Based on the framework of the approximate bisimulation relation, we propose a neural network merging algorithm to calculate the approximate bisimulation error, measuring the distance between two neural networks. Applying this approximate bisimulation method to neural network model compression, we can obtain the precision of neural network model compression, which is able to provide assurance to perform tasks using compressed neural networks on behalf of original ones. To illustrate the feasibility of the approximate bisimulation method, we apply it to accelerate verification processes of the ACAS Xu neural networks using the compressed neural networks.

The remainder of the paper is organized as follows: Preliminaries are given in Section II. The approximate bisimulation relation and approximate bisimulation error computation are presented in Section III. Assured neural network compression and  examples are given in Section IV. The conclusion is presented in Section V.

\textit{Notations:} For the rest of paper, $\mathbf{0}_{n \times m}$ denotes denotes a matrix of $n$ rows and $m$ columns with all elements zero, $\mathbf{I}_n$ denotes the $n$-dimensional unit matrix. $\mathsf{purelin}(\cdot)$ is linear transfer function, i.e., $x=\mathsf{purelin}(x)$.

\section{Preliminaries}
In this paper, we consider a class of feedforward neural networks which generally consist of one input layer, multiple hidden layers and one output layer. Each layer consists of one or multiple neurons. The action of a neuron depends on its activation function, which is in the description of
\begin{align} \label{neuron}
    y_i=\phi{(\sum\nolimits_{j=1}^{n} w_{ij}x_{j}+b_i)}
\end{align}
where $y_i$ is the output of the $i$th neuron, $x_j$ is the $j$th input of the $i$th neuron, $w_{ij}$ is the weight from the $j$th input to the $i$th neuron, $b_i$ is the bias of the $i$th input, $\phi(\cdot)$ is the activation function. 
Each layer $\ell$ ($1 \le \ell \le L$) of a feedforward neural network has $n^{\{\ell\}}$ neurons. Layer $\ell = 0$ denote the input layer, $n^{\{0\}}$ denote the number of the input layer. For the layer $\ell$, the input vector is denoted by $\mathbf{x}^{\{\ell\}}$, respectively, the weight matrix and the bias vector are 
\begin{align} 
    \mathbf{W}^{\{\ell\}}&=[w_1^{\{\ell\}},\cdots,w_{n^{\{\ell\}}}^{\{\ell\}}]^\mathrm{T} \\
    \mathbf{b}^{\{\ell\}}&=[b_1^{\{\ell\}},\cdots,b_{n^{\{\ell\}}}^{\{\ell\}}]^\mathrm{T}
\end{align}
where $w_i^{\{\ell\}}$ is the weight vector, $b_i^{\{\ell\}}$ is the bias value. The output vector of layer $\ell$ is $\mathbf{y}^{\{\ell\}}$ defined by
\begin{align}
    \mathbf{y}^{\{\ell\}}=\phi^{\{\ell\}}(\mathbf{W}^{\{\ell\}}\mathbf{x}^{\{\ell\}}+\mathbf{b}^{\{\ell\}})
\end{align}
where $\phi^{\{\ell\}}(\cdot)$ is the activation function of layer $\ell$.

For the whole neural network, the input and output layer are $\mathbf{x}^{[0]}$ and $\mathbf{y}^{[L]}$ respectively, the input of the layer $\ell$ is the output of the layer $\ell-1$, the mapping relation from the input to the output is denoted by
\begin{align}\label{NN}
   \mathbf{y}^{\{L\}}=\Phi(\mathbf{x}^{\{0\}})
\end{align}
where $\Phi(\cdot) \triangleq \phi^{\{L\}} \circ \phi^{\{L-1\}} \cdots \phi^{\{1\}}(\cdot)$. The mapping relation $\Phi$ includes not only the activation function of the neural network, but also the weight matrix and the bias vectors, which represent the structural information of the neural network.

Given an input set $\mathcal{X}$, the output reachable set of a neural network is stated by the definition below. 
\begin{definition}\label{def1}
Given a neural network in the form of (\ref{NN}) and input set $\mathcal{X} \in \mathbb{R}^{n^{\{0\}}}$, the following set
\begin{align}\label{reachset}
   \mathcal{Y}= \left\{ \mathbf{y}^{\{L\}} \in \mathbb{R}^{n^{\{L\}}} \mid \mathbf{y}^{\{L\}} = \Phi(\mathbf{x}^{\{0\}}), \mathbf{x}^{\{0\}} \in \mathcal{X} \right\}
\end{align}
is called the output reachable set of neural network {(\ref{NN})}.
\end{definition}

The safety specification of a neural network is expressed by the set defined in the output space, describing the safety requirement.

\begin{definition} \label{def2}
Safety specification $\mathcal{S}$ formalizes the safety requirement for output $\mathbf{y}^{\{L\}}$ of neural network {(\ref{NN})}, and is a predicate over output $\mathbf{y}^{\{L\}}$ of neural network {(\ref{NN})}. The neural network {(\ref{NN})} is safe if and only if the following condition is satisfied:
\begin{align}
   \mathcal{Y} \cap \neg \mathcal{S} = \emptyset 
\end{align}
where $\mathcal{Y}$ is the output set defined by (\ref{reachset}), and $\neg$ is the symbol for logical negation.
\end{definition}

The above safety verification concept is reachability-based and will be used in Section IV for safety verification of neural networks of Airborne  Collision  Avoidance Systems in \cite{owen2019acas}.

\section{Approximation Simulation Relations of Neural Networks}

\subsection{Approximation Bisimulation Relations}


In order to characterize the difference of two feedforward neural networks in terms of outputs, we defined the following metric which measures the distance between the outputs of two neural networks in the framework of the reachable set defined in Definition \ref{def1}.

\begin{definition}\label{def2} Consider two neural networks $\mathbf{y}^{\{L\}}=\Phi_j(\mathbf{x}^{\{0\}})$, $j \in \{1,2\}$, input set $\mathcal{X} \in \mathbb{R}^{n^{\{0\}}}$, and output sets $\mathcal{Y}_j \in \mathbb{R}^{n^{\{L\}}}$, $j \in \{1,2\}$, we define $\mathcal{N}_j = (\mathcal{X},\mathcal{Y}_j,{\Phi}_j)$, $j \in \{1,2\}$, and 
\begin{align} \label{def2_1}
   d(\Phi_1(\mathbf{x}^{\{0\}}_1),\Phi_2(\mathbf{x}^{\{0\}}_2)) = 
   \begin{cases}
   \rho(\mathbf{y}^{\{L\}}_1,\mathbf{y}^{\{L\}}_2) &\mbox{ if } \mathbf{x}^{\{0\}}_1 = \mathbf{x}^{\{0\}}_2\\
   +\infty &\mbox{ otherwise }
   \end{cases}
\end{align}
where 
\begin{align}\label{rho}
    \rho(\mathbf{y}^{\{L\}}_1,\mathbf{y}^{\{L\}}_2) = \sup \limits_{\mathbf{y}^{\{L\}}_1 \in \mathcal{Y}_1, \mathbf{y}^{\{L\}}_2 \in \mathcal{Y}_2} \left\|{\mathbf{y}^{\{L\}}_1}-{\mathbf{y}^{\{L\}}_2}\right\| .
\end{align}
\end{definition}

It is noted that  $d(\Phi_1(\mathbf{x}^{\{0\}}_1),\Phi_2(\mathbf{x}^{\{0\}}_2))$ defined in (\ref{def2_1}) characterizes the maximal difference between the outputs of two neural networks generated from a same input, which quantifies the discrepancy between two neural networks ${\Phi}_1$ and ${\Phi}_2$ in terms of outputs. Based on Definition \ref{def2}, we will be able to establish the approximate bisimulation relation of two neural networks.  

\begin{definition}
Consider $\mathcal{N}_j = (\mathcal{X},\mathcal{Y}_j,{\Phi}_j)$, $j \in \{1,2\}$, and let $\varepsilon \ge 0$, a relation $\mathscr{R}_\varepsilon \in \mathbb{R}^{n^{\{L\}}} \times \mathbb{R}^{n^{\{L\}}}$ is called an approximate simulation relation between $\mathcal{N}_1$ and $\mathcal{N}_2$, of precision $\varepsilon$, if for all $({\mathbf{y}^{\{L\}}_1},\mathbf{y}^{\{L\}}_2) \in \mathcal{R}_\varepsilon$
\begin{enumerate}
    \item $d(\Phi_1(\mathbf{x}^{\{0\}}),\Phi_2(\mathbf{x}^{\{0\}})) \le \varepsilon$, $\forall \mathbf{x}^{\{0\}} \in \mathcal{X}$;
     \item $\forall \mathbf{x}^{\{0\}} \in \mathcal{X}$, $\forall \Phi_1(\mathbf{x}^{\{0\}}) \in \mathcal{Y}_1$, $\exists  \Phi_2(\mathbf{x}^{\{0\}})  \in \mathcal{Y}_2$ such that $(\Phi_1(\mathbf{x}^{\{0\}}), \Phi_2(\mathbf{x}^{\{0\}})) \in \mathscr{R}_\varepsilon$;
    \item $\forall \mathbf{x}^{\{0\}} \in \mathcal{X}$, $\forall \Phi_2(\mathbf{x}^{\{0\}})\in {\mathcal{Y}}_2$, $\exists  \Phi_1(\mathbf{x}^{\{0\}}) \in \mathcal{Y}_1$ such that $(\Phi_1(\mathbf{x}^{\{0\}}), \Phi_2(\mathbf{x}^{\{0\}})) \in \mathscr{R}_\varepsilon$
\end{enumerate}
and we say neural networks $\mathcal{N}_1$ and $\mathcal{N}_2$  are approximately bisimilar with precision $\varepsilon$, denoted by $\mathcal{N}_1 \sim_{\varepsilon} \mathcal{N}_2$ .
\end{definition}
\begin{remark}
The meaning of approximate bisimulation between two neural networks $\mathcal{N}_1$ and $\mathcal{N}_2$ with precision $\varepsilon$, which denoted by $\mathcal{N}_1 \sim_{\varepsilon} \mathcal{N}_2$, is as follows: 
Considering two neural networks $\mathcal{N}_1$ and $\mathcal{N}_2$ and any output of neural network $\mathcal{N}_1$, we can find one output generated by the same corresponding input out of neural network $\mathcal{N}_2$, and vice versa. The two outputs of two neural networks always satisfy that the distance between them is bounded by $\varepsilon$. In the case of $\varepsilon = 0$, we can define that the two neural networks have an exact simulation relation. 
\end{remark}

Then, we define metrics measuring the distance between the observed behaviors of neural network  $\mathcal{N}_1$ and $\mathcal{N}_2$. Based on the defined notion of approximate bisimulation, we can define the approximate bisimulation error to represent the distance between two neural networks. 

\begin{definition}
Given two neural networks $\mathcal{N}_1$ and $\mathcal{N}_2$, the approximate bisimulation error of them are defined by 
\begin{align}\label{metric}
   d(\mathcal{N}_1,\mathcal{N}_2) &= \sup\{\varepsilon \mid \mathcal{N}_1 \sim_\varepsilon \mathcal{N}_2\}
\end{align}
where $\varepsilon \ge 0$.
\end{definition}

The key to establish the approximation bisimulation relation between two neural networks is how to efficiently compute the approximation bisimulation error defined by (\ref{metric}). In the next subsection, a reachability-based method is proposed to compute the approximate bisimulation error.

\subsection{Approximate Bisimulation Error Computation}
In order to compute the approximate bisimulation error $\varepsilon$ between two neural network outputs, the set-valued reachability methods can be used. First, consider two neural networks with same input set $\mathcal{X}$, a feedforward neural network $\mathcal{N}_L$ with $L$ hidden layers and $n^{\{l\}}$, $l = 1,\ldots,L$ neurons in each layer, and its bisimilar feedforward neural network  $\mathcal{N}_S$ with $S$ hidden layers and $n^{\{s\}}$, $l = 1,\ldots,S$ neurons in each hidden layer. 

Without loss of generality, the following assumption is given for neural networks $\mathcal{N}_L$ and $\mathcal{N}_S$. 

\begin{assumption}\label{assumption_1}The following assumptions hold for two neural networks $\mathcal{N}_L$  and $\mathcal{N}_s$:
\begin{enumerate}
    \item The number of inputs of two neural networks are same, i.e., $n_L^{\{0\}} = n_S^{\{0\}}$;
    \item The number of outputs of two neural networks are same, i.e., $n_L^{\{L\}} = n_S^{\{S\}}$;
    \item The number of hidden layers of neural network $\mathcal{N}_L$ is greater than or equal the number of hidden layers of neural network $\mathcal{N}_S$, i.e., $L \ge S$.
\end{enumerate}

\end{assumption}

According to (\ref{rho}), (\ref{metric}), the approximate bisimulation error between $\mathcal{N}_L$ and $\mathcal{N}_S$ can be expressed by 
\begin{align} \label{distance}
    d(\mathcal{N}_L,\mathcal{N}_S)= \sup \limits_{\mathbf{x}^{\{0\}} \in \mathcal{X}} \left\|\Phi_L(\mathbf{x}^{\{0\}})-\Phi_S(\mathbf{x}^{\{0\}})\right\| .
\end{align}
To obtain the approximate bisimulation error of the two neural networks, i.e., $d(\mathcal{N}_L,\mathcal{N}_S)$, we propose to merge the two neural networks in a non-fully connected structure $\mathcal{N}_M$, which is able to generate the output $\mathbf{y}^{\{M\}}_M$ exactly characterizing the difference of the outputs of $\mathcal{N}_L$ and $\mathcal{N}_S$, i.e., $\mathbf{y}^{\{M\}}_M = \mathbf{y}^{\{L\}}_L - \mathbf{y}^{\{S\}}_S$. 

\noindent \textbf{Merged Neural Network $\mathcal{N}_M$: } 
To begin with, we consider two neural networks $\mathcal{N}_L$ and $\mathcal{N}_S$ with same input $\mathbf{x}^{\{0\}}$. We use $\mathbf{W}_M^{\{m\}}$ and $\mathbf{b}_M^{\{m\}}$  to denote the weight matrix and bias vector of the $m$th layer of the merged neural network $\mathcal{N}_M$, $\mathbf{x}^{\{m\}}_M$ and $\mathbf{y}^{\{m\}}_M$ are input and output vectors of $m$th layer of $\mathcal{N}_M$. The structure of the merged neural network $\mathcal{N}_M$ with $L+1$ layers is recursively defined as below: 
\begin{align}\label{mNN}
\begin{cases}
    \mathbf{y}_M^{\{m\}} = \phi_{M}^{\{m\}}(\mathbf{W}^{\{m\}}_M\mathbf{x}_M^{\{m-1\}}+\mathbf{b}^{\{m\}}_M)
    \\
    \mathbf{x}_M^{\{m\}} = \mathbf{y}_M^{\{m\}}
    \end{cases}
\end{align}
where $m = 1,2,\ldots,L+1$. The input is $\mathbf{x}_M^{\{0\}} = \mathbf{x}^{\{0\}} $, output is $\mathbf{y}_M^{\{L+1\}}$,  weight matrices $\mathbf{W}^{\{m\}}_M$ and bias vectors $\mathbf{b}^{\{m\}}_M$, and activation functions $\phi_{M}^{\{m\}}(\cdot)$ are categorized as the following five cases: 

\begin{enumerate}
    \item When $m = 1$, $\mathbf{W}^{\{1\}}_M$,  $\mathbf{b}^{\{1\}}_M$, and  $\phi_{M}^{\{1\}}(\cdot)$ are
    \begin{align} \label{mNN_1_1}
    \mathbf{W}^{\{1\}}_M&= \begin{bmatrix}
    \mathbf{W}^{\{1\}}_L \\ \mathbf{W}^{\{1\}}_S
    \end{bmatrix}
    \\
    \label{mNN_1_2}
    \mathbf{b}^{\{1\}}_M &= \begin{bmatrix}
    \mathbf{b}^{\{1\}}_L \\ \mathbf{b}^{\{1\}}_S
    \end{bmatrix}
    \\\
    \label{mNN_1_3}
   \phi^{\{1\}}_M(\cdot) &= \begin{bmatrix}
      \phi^{\{1\}}_L(\cdot) \\   \phi^{\{1\}}_S(\cdot)
    \end{bmatrix} .
\end{align}
\item When $1 < m \le S-1$,
 $\mathbf{W}^{\{m\}}_M$,  $\mathbf{b}^{\{m\}}_M$, and $\phi_{M}^{\{m\}}(\cdot)$ are
\begin{align} \label{mNN_2_1}
    \mathbf{W}^{\{m\}}_M &= \begin{bmatrix}
    \mathbf{W}^{\{m\}}_L & \mathbf{0}_{{n^{\{m\}}_L} \times {n^{\{m-1\}}_S}}
    \\
    \mathbf{0}_{{n^{\{m\}}_S} \times {n^{\{m-1\}}_L}} & \mathbf{W}^{\{m\}}_S
    \end{bmatrix}
    \\
        \label{mNN_2_2}
    \mathbf{b}^{\{m\}}_M  &= \begin{bmatrix}
    \mathbf{b}^{\{m\}}_L
    \\
    \mathbf{b}^{\{m\}}_S
    \end{bmatrix}
    \\
     \label{mNN_2_3}
   \phi^{\{m\}}_M(\cdot) &= \begin{bmatrix}
      \phi^{\{m\}}_L(\cdot) \\   \phi^{\{m\}}_S(\cdot)
    \end{bmatrix} .
\end{align}
\item When $S-1 < m \le L-1$, $\mathbf{W}^{\{m\}}_M$,  $\mathbf{b}^{\{m\}}_M$, and $\phi_{M}^{\{m\}}(\cdot)$ are
\begin{align}
 \label{mNN_3_1}
    \mathbf{W}^{\{m\}}_M & = \begin{bmatrix}
    \mathbf{W}^{\{m\}}_L & \mathbf{0}_{{n^{\{m\}}_L} \times {n^{\{S-1\}}_S}}
    \\
    \mathbf{0}_{{n^{\{S-1\}}_S} \times {n^{\{m\}}_L}} & \mathbf{I}_{n^{\{S-1\}}_S}
    \end{bmatrix}
    \\
    \label{mNN_3_2}
    \mathbf{b}^{\{m\}}_M & = \begin{bmatrix}
    \mathbf{b}^{\{m\}}_L
    \\
    \mathbf{0}_{{n^{\{S-1\}}_S} \times 1}
    \end{bmatrix}
        \\
         \label{mNN_3_3}
   \phi^{\{m\}}_M(\cdot) &= \begin{bmatrix}
      \phi^{\{m\}}_L(\cdot) \\  
      \mathsf{purelin}(\cdot)
    \end{bmatrix} .
\end{align}
\item When $m = L$,  $\mathbf{W}^{\{L\}}_M$,  $\mathbf{b}^{\{L\}}_M$, and $\phi_{M}^{\{L\}}(\cdot)$ are
\begin{align}
   \label{mNN_4_1}
    \mathbf{W}^{\{L\}}_M &= \begin{bmatrix}
    \mathbf{W}^{\{L\}}_L & \mathbf{0}_{{n^{\{L\}}_L} \times {n^{\{S-1\}}_S}}
    \\
    \mathbf{0}_{{n^{\{S\}}_S} \times {n^{\{L-1\}}_L}} & \mathbf{W}^{\{S\}}_S 
    \end{bmatrix}
    \\
    \label{mNN_4_2}
    \mathbf{b}^{\{L\}}_M &= \begin{bmatrix}
    \mathbf{b}^{\{L\}}_L \\
    \mathbf{b}^{\{S\}}_S
    \end{bmatrix}
            \\
            \label{mNN_4_3}
   \phi^{\{L\}}_M(\cdot) &= \begin{bmatrix}
      \phi^{\{L\}}_L(\cdot) \\   \phi^{\{S\}}_S(\cdot)
    \end{bmatrix} .
\end{align}
\item When $m = L+1$,  $\mathbf{W}^{\{L+1\}}_M$,  $\mathbf{b}^{\{L+1\}}_M$, and $\phi_{M}^{\{L+1\}}(\cdot)$ are
\begin{align} \label{mNN_5_1}
    \mathbf{W}^{\{L+1\}}_M &= \begin{bmatrix}
    \mathbf{I}_{n^{\{L\}}_L} & -\mathbf{I}_{n^{\{L\}}_L} 
    \end{bmatrix}
    \\
    \label{mNN_5_2}
    \mathbf{b}^{\{L+1\}}_M &= \begin{bmatrix}
    \mathbf{0}_{{2n^{\{L\}}_L} \times 1} 
    \end{bmatrix}
    \\
     \label{mNN_5_3}
    \phi_{M}^{\{L+1\}}(\cdot) &= \mathsf{purelin}(\cdot) .
\end{align}
\end{enumerate}
\begin{figure*}[ht!]
\centering
	\includegraphics[width=\textwidth]{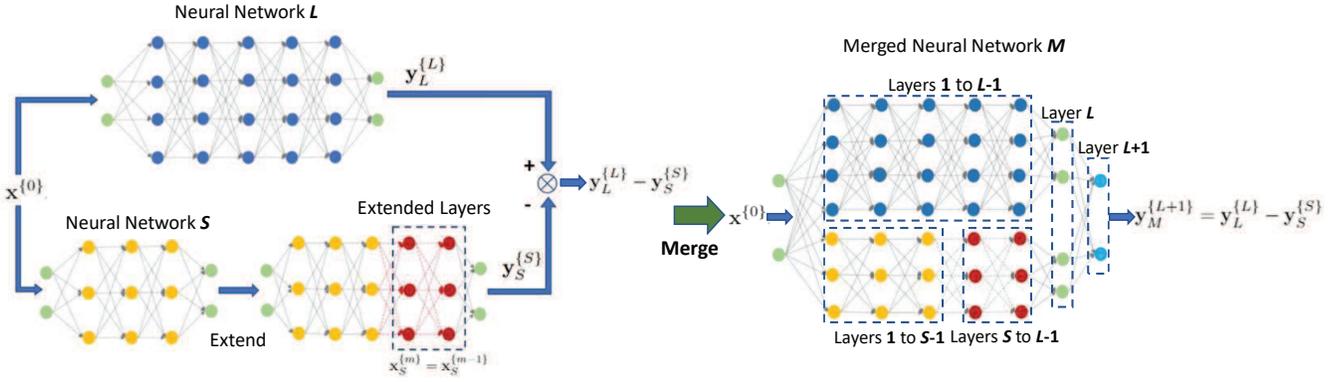}
	\caption{Neural network merging process for approximate bisimulation error computation}
	\label{nn_bisim} 
\end{figure*}
\begin{remark}
In the merging process of neural networks $\mathcal{N}_L$ and $\mathcal{N}_S$, (\ref{mNN_1_1})--(\ref{mNN_1_3}) ensures that  merged neural network $\mathcal{N}_M$ takes the one input $\mathbf{x}^{\{0\}}$ for the subsequent calls  involving both processes of $\mathcal{N}_L$ and $\mathcal{N}_S$. Then, for $1 < m \le S-1$, $\mathcal{N}_M$ conducts the computation of $\mathcal{N}_L$ and $\mathcal{N}_S$ parallelly for the hidden layers of $1 < m \le S-1$. When $S-1 < m \le L-1$,  the hidden layers of neural network $\mathcal{N}_S$ which has a less hidden layers are expanded to match the number of layers of  neural network $\mathcal{N}_L$ with larger number of hidden layers, but the expanded layers are forced to be pass the information to subsequent layers without any changes, i.e., the weight matrices of the expanded hidden layers are identity matrices, and the bias vector is the zero vectors. This expansion is formalized as (\ref{mNN_3_1})--(\ref{mNN_3_3}). Moreover, as $m=L$, this layer is a combination of output layers of both $\mathcal{N}_L$ and $\mathcal{N}_S$ to generate the same outputs of $\mathcal{N}_L$ and $\mathcal{N}_S$. At last, a comparison layer $L+1$ is added to compute the exact difference between two bisimular neural networks.
\end{remark}

With the merged neural network $\mathcal{N}_M$ in the description of (\ref{mNN})--(\ref{mNN_5_3}), we are ready to propose the main contribution of this work in Proposition \ref{proposition_1}.

\begin{proposition} \label{proposition_1}
Given two neural networks $\mathcal{N}_L$ with $L$ layers and $\mathcal{N}_S$ with $S$ layers under Assumption \ref{assumption_1}, the output $  \mathbf{y}_M^{\{L+1\}}$ of their merged neural network $\mathcal{N}_M$ defined by (\ref{mNN})--(\ref{mNN_5_3}) equals the difference of the output $\mathbf{y}_L^{\{L\}}$ of $\mathcal{N}_L$ and the output $\mathbf{y}_S^{\{S\}}$ of $\mathcal{N}_S$, i.e., 
\begin{align}
     \mathbf{y}_M^{\{L+1\}} = \mathbf{y}_L^{\{L\}} - \mathbf{y}_S^{\{S\}}
\end{align}
holds for any input $\mathbf{x}^{\{0\}} \in \mathcal{X}$, where $\mathbf{y}_L^{\{L\}}=\Phi_L(\mathbf{x}^{\{0\}})$ and $\mathbf{y}_S^{\{S\}}=\Phi_S(\mathbf{x}^{\{0\}})$.
\end{proposition}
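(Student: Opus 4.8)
The plan is to prove the identity by induction on the layer index $m$ of the merged network $\mathcal{N}_M$, carrying an invariant that exhibits the state vector $\mathbf{y}_M^{\{m\}}$ at each layer as an explicit stacking of the corresponding layer states of $\mathcal{N}_L$ and $\mathcal{N}_S$. Write $\mathbf{y}_L^{\{\ell\}}$ and $\mathbf{y}_S^{\{\ell\}}$ for the layer-$\ell$ outputs of the two original networks evaluated at the common input $\mathbf{x}^{\{0\}}$, which is meaningful by Assumption~\ref{assumption_1}(1). The invariant I would maintain is
\[
\mathbf{y}_M^{\{m\}}=\begin{bmatrix}\mathbf{y}_L^{\{m\}}\\ \mathbf{y}_S^{\{m\}}\end{bmatrix}\ \ (1\le m\le S-1),\qquad \mathbf{y}_M^{\{m\}}=\begin{bmatrix}\mathbf{y}_L^{\{m\}}\\ \mathbf{y}_S^{\{S-1\}}\end{bmatrix}\ \ (S-1\le m\le L-1),\qquad \mathbf{y}_M^{\{L\}}=\begin{bmatrix}\mathbf{y}_L^{\{L\}}\\ \mathbf{y}_S^{\{S\}}\end{bmatrix},
\]
the two regimes for $m$ coinciding at $m=S-1$, so this is consistent. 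Granting the invariant, the layer-$(L+1)$ description (\ref{mNN_5_1})--(\ref{mNN_5_3}) together with $n_L^{\{L\}}=n_S^{\{S\}}$ from Assumption~\ref{assumption_1}(2) gives $\mathbf{y}_M^{\{L+1\}}=\mathsf{purelin}\!\left(\begin{bmatrix}\mathbf{I}_{n_L^{\{L\}}} & -\mathbf{I}_{n_L^{\{L\}}}\end{bmatrix}\mathbf{y}_M^{\{L\}}\right)=\mathbf{y}_L^{\{L\}}-\mathbf{y}_S^{\{S\}}$, which is the assertion.

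For the induction I would treat the five cases of (\ref{mNN_1_1})--(\ref{mNN_5_3}) in turn. The base case $m=1$ is immediate from (\ref{mNN_1_1})--(\ref{mNN_1_3}) and $\mathbf{x}_M^{\{0\}}=\mathbf{x}^{\{0\}}$: the stacked weight matrix and bias vector and the block activation reproduce exactly $\mathbf{y}_L^{\{1\}}$ and $\mathbf{y}_S^{\{1\}}$. For $1<m\le S-1$, the block-diagonal weight matrix (\ref{mNN_2_1}) with zero off-diagonal blocks keeps the two sub-streams uncoupled, so $\mathbf{W}_M^{\{m\}}\mathbf{x}_M^{\{m-1\}}+\mathbf{b}_M^{\{m\}}$ is the stacking of $\mathbf{W}_L^{\{m\}}\mathbf{y}_L^{\{m-1\}}+\mathbf{b}_L^{\{m\}}$ over $\mathbf{W}_S^{\{m\}}\mathbf{y}_S^{\{m-1\}}+\mathbf{b}_S^{\{m\}}$, and applying $\phi_M^{\{m\}}$ componentwise preserves the invariant. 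Entering case 3 at $m=S$, the lower block of (\ref{mNN_3_1})--(\ref{mNN_3_3}), being an identity map followed by $\mathsf{purelin}$, copies $\mathbf{y}_S^{\{S-1\}}$ forward unchanged while the upper block advances $\mathcal{N}_L$ by one layer; the identical computation then repeats for $S<m\le L-1$. Finally, at $m=L$ the description (\ref{mNN_4_1})--(\ref{mNN_4_3}) applies the genuine output-layer parameters of $\mathcal{N}_L$ and of $\mathcal{N}_S$ to the respective halves of $\mathbf{y}_M^{\{L-1\}}=\begin{bmatrix}\mathbf{y}_L^{\{L-1\}}\\ \mathbf{y}_S^{\{S-1\}}\end{bmatrix}$, producing $\begin{bmatrix}\mathbf{y}_L^{\{L\}}\\ \mathbf{y}_S^{\{S\}}\end{bmatrix}$. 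I would also record the degenerate cases: when $L=S$ the range $S-1<m\le L-1$ is empty, so case 4 is reached directly from the $1<m\le S-1$ regime (Assumption~\ref{assumption_1}(3) ensuring the number of padding layers is nonnegative), and when $S=2$ the range $1<m\le S-1$ is empty, so case 3 (or case 4 if also $L=2$) is reached directly from the base case.

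I do not anticipate a substantive obstacle: the argument is bookkeeping of block structures. The two places that need genuine care are (i) the observation that each $\phi_M^{\{m\}}$ acts componentwise, so that a block-stacked pre-activation is mapped to the block-stacking of the separate post-activations, which is exactly what keeps the $\mathcal{N}_L$ and $\mathcal{N}_S$ computations uncoupled inside $\mathcal{N}_M$; and (ii) the boundary cases above, together with checking that the index ranges in (\ref{mNN_1_1})--(\ref{mNN_5_3}) tile $\{1,\dots,L+1\}$ with no gaps and no overlaps, so that the induction actually threads through every layer. Assumption~\ref{assumption_1} enters at three points: (1) to give meaning to feeding the same input to both networks, (2) to make the subtraction in layer $L+1$ dimensionally consistent, and (3) to ensure the padding region is well-defined.
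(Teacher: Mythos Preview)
Your proposal is correct and follows essentially the same approach as the paper: a layer-by-layer verification that $\mathbf{y}_M^{\{m\}}$ is the stacking of the corresponding states of $\mathcal{N}_L$ and $\mathcal{N}_S$, proceeding through the five cases of (\ref{mNN_1_1})--(\ref{mNN_5_3}) and finishing with the linear comparison layer. Your version is in fact slightly more careful than the paper's, since you make the induction invariant explicit and you treat the degenerate cases $L=S$ and $S=2$ where one of the intermediate index ranges collapses.
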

\begin{proof}
 Considering an input $\mathbf{x}^{\{0\}} \in \mathcal{X}$ and according to (\ref{mNN_1_1})--(\ref{mNN_1_3}), the following results for output of layer $m=1$ of merged neural network $\mathcal{N}_M$ can be obtained
 \begin{align}
     \mathbf{y}_M^{\{1\}} 
     =
     \begin{bmatrix}
      \phi^{\{1\}}_L( \mathbf{W}^{\{1\}}_L\mathbf{x}^{\{0\}}+ \mathbf{b}^{\{1\}}_L) \\   \phi^{\{1\}}_S(\mathbf{W}^{\{1\}}_S\mathbf{x}^{\{0\}}+ \mathbf{b}^{\{1\}}_S)
     \end{bmatrix} =      \begin{bmatrix} \mathbf{x}_L^{\{1\}} 
       \\  \mathbf{x}_S^{\{1\}} .
     \end{bmatrix}
 \end{align}
 
Further considering layers $1 < m \le S-1$ of $\mathcal{N}_M$, and using (\ref{mNN_2_1}) and (\ref{mNN_2_2}), it leads to
 \begin{align}
     \mathbf{W}^{\{m\}}_M \mathbf{x}_M^{\{m-1\}} + \mathbf{b}^{\{m\}}_M &= \begin{bmatrix}
    \mathbf{W}^{\{m\}}_L\mathbf{x}_L^{\{m-1\}} + \mathbf{b}^{\{m\}}_L
    \\
     \mathbf{W}^{\{m\}}_S\mathbf{x}_S^{\{m-1\}}+\mathbf{b}^{\{m\}}_S
    \end{bmatrix}
 \end{align}
 where $1 < m \le S-1$. Then based on (\ref{mNN_2_3}), recursively we can obtain
 \begin{align}
     \mathbf{y}_M^{\{S-1\}} 
     =&
     \begin{bmatrix}
      \phi^{\{S-1\}}_L\circ \cdots \circ\phi^{\{1\}}_L( \mathbf{W}^{\{1\}}_L\mathbf{x}^{\{0\}}+ \mathbf{b}^{\{1\}}_L) \\  \phi^{\{S-1\}}_S\circ \cdots \circ\phi^{\{1\}}_S(\mathbf{W}^{\{1\}}_S\mathbf{x}^{\{0\}}+ \mathbf{b}^{\{1\}}_S)
     \end{bmatrix}
     \\
     = & \begin{bmatrix} \mathbf{x}_L^{\{S-1\}} 
       \\  \mathbf{x}_S^{\{S-1\}}
     \end{bmatrix} .
 \end{align}
 
 Moreover, considering $S-1 < m \le L-1$ and using (\ref{mNN_3_1}) and (\ref{mNN_3_2}), one can obtain
\begin{align}
    \mathbf{W}^{\{m\}}_M \mathbf{x}_M^{\{m-1\}} + \mathbf{b}^{\{m\}}_M &= \begin{bmatrix}
    \mathbf{W}^{\{m\}}_L\mathbf{x}_L^{\{m-1\}} + \mathbf{b}^{\{m\}}_L
    \\
     \mathbf{x}_S^{\{m-1\}}
    \end{bmatrix}
\end{align}
where $S-1 < m \le L-1$. From (\ref{mNN_3_3}), it yields
 \begin{align} \nonumber
     \mathbf{y}_M^{\{L-1\}} 
     =&
     \begin{bmatrix} \mathbf{x}_L^{\{L-1\}} 
       \\  \mathbf{x}_S^{\{S-1\}}
     \end{bmatrix}
 \end{align}
in which $\mathbf{x}_M^{\{L-1\}}$ is defined as
 \begin{align} \nonumber
     \mathbf{x}_M^{\{L-1\}}  &= \phi^{\{L-1\}}_L\circ \cdots \circ\phi^{\{S\}}_L( \mathbf{W}^{\{S\}}_L\mathbf{x}_L^{\{S-1\}}+ \mathbf{b}^{\{S\}}_L)
     \\
     &=\phi^{\{L-1\}}_L\circ \cdots \circ\phi^{\{1\}}_L( \mathbf{W}^{\{1\}}_L\mathbf{x}_L^{\{0\}}+ \mathbf{b}^{\{1\}}_L) .
 \end{align}
 
Then, as $m=L$ with (\ref{mNN_4_1}) and (\ref{mNN_4_2}) as well as $\mathbf{x}_M^{\{L-1\}} = \mathbf{y}_M^{\{L-1\}} $, it leads to
\begin{align}
     \mathbf{W}^{\{L\}}_M \mathbf{x}_M^{\{L-1\}} + \mathbf{b}^{\{L\}}_M &= \begin{bmatrix}
    \mathbf{W}^{\{L\}}_L\mathbf{x}_L^{\{L-1\}} + \mathbf{b}^{\{L\}}_L
    \\
     \mathbf{W}^{\{S\}}_S\mathbf{x}_S^{\{S-1\}}+\mathbf{b}^{\{S\}}_S
    \end{bmatrix}
 \end{align}

Also due to (\ref{mNN_4_3}), we can have
\begin{align}
     \mathbf{y}_M^{\{L\}} 
     =
     \begin{bmatrix}
    \phi_{L}^{\{L\}}(\mathbf{W}^{\{L\}}_L\mathbf{x}_L^{\{L-1\}} + \mathbf{b}^{\{L\}}_L)
    \\
     \phi_{S}^{\{S\}}(\mathbf{W}^{\{S\}}_S\mathbf{x}_S^{\{S-1\}}+\mathbf{b}^{\{S\}}_S)
    \end{bmatrix} =      \begin{bmatrix} \mathbf{y}_L^{\{L\}} 
       \\  \mathbf{y}_S^{\{S\}}
     \end{bmatrix} .
 \end{align}
 
 At last, when $m=L+1$ with (\ref{mNN_5_1})--(\ref{mNN_5_3}), the following result can be obtained
 \begin{align} \nonumber
     \mathbf{y}_M^{\{L+1\}} &= \mathbf{W}^{\{L+1\}}_M \mathbf{x}_L^{\{M\}}  + \mathbf{b}^{\{L+1\}}_M 
     \\\nonumber
    & = \begin{bmatrix}
    \mathbf{I}_{n^{\{L\}}_L} & -\mathbf{I}_{n^{\{L\}}_L} 
    \end{bmatrix} \begin{bmatrix} \mathbf{y}_L^{\{L\}} 
       \\  \mathbf{y}_S^{\{S\}}
     \end{bmatrix}
     \\
     &= \mathbf{y}_L^{\{L\}} -  \mathbf{y}_S^{\{S\}} .
 \end{align}
where $\mathbf{y}_L^{\{L\}}=\Phi_L(\mathbf{x}^{\{0\}})$ and $\mathbf{y}_S^{\{S\}}=\Phi_S(\mathbf{x}^{\{0\}})$.
The proof is complete.
\end{proof}

Proposition \ref{proposition_1} implies that, for any individual input $\mathbf{x}^{\{0\}}$, we can compute the difference of the outputs between two bisimilar neural networks via generating the output of their merged neural network of $\mathbf{x}^{\{0\}}$. This lays the foundation of computing the approximate bisimulation error in the description of (\ref{distance}), i.e., the computation of the maximum discrepancy between two bisimilar neural networks subject to an input set $\mathcal{X}$ can be converted to the output reachable set $\mathcal{Y}_M$ computation of merged neural network $\mathcal{N}_M$.


\begin{proposition}\label{proposition_2}
Given an input set $\mathcal{X}$, two neural networks $\mathcal{N}_L$ with $L$ layers and $\mathcal{N}_S$ with $S$ layers under Assumption \ref{assumption_1}, their merged neural network $\mathcal{N}_M$ can be defined by (\ref{mNN})--(\ref{mNN_5_3}). Then, the  approximate bisimulation error between $\mathcal{N}_L$ and $\mathcal{N}_S$ can be computed by 
\begin{align} \label{distance_reach}
    d(\mathcal{N}_L,\mathcal{N}_S)= \sup \limits_{\mathbf{y}_M^{\{L+1\}} \in \mathcal{Y}_M}\left\|\mathbf{y}_M^{\{L+1\}}\right\| 
\end{align}
where $\mathbf{y}_M^{\{L+1\}}=\Phi_M(\mathbf{x}^{\{0\}})$ is the output of $\mathcal{N}_M$ and $\mathcal{Y}_M$ is the output reachable set of $\mathcal{N}_M$.
\end{proposition}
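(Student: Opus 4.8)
The plan is to chain together Proposition \ref{proposition_1} with the definition of the approximate bisimulation error given in (\ref{distance}). The essential observation is that Proposition \ref{proposition_1} establishes a pointwise identity $\mathbf{y}_M^{\{L+1\}} = \mathbf{y}_L^{\{L\}} - \mathbf{y}_S^{\{S\}} = \Phi_L(\mathbf{x}^{\{0\}}) - \Phi_S(\mathbf{x}^{\{0\}})$ that holds for every $\mathbf{x}^{\{0\}} \in \mathcal{X}$. Since the merged network $\mathcal{N}_M$ realizes the mapping $\Phi_M$, this says precisely that $\Phi_M(\mathbf{x}^{\{0\}}) = \Phi_L(\mathbf{x}^{\{0\}}) - \Phi_S(\mathbf{x}^{\{0\}})$ on all of $\mathcal{X}$, so the two functions agree as maps on the input set.

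First I would invoke (\ref{distance}), which expresses the approximate bisimulation error as $d(\mathcal{N}_L,\mathcal{N}_S) = \sup_{\mathbf{x}^{\{0\}} \in \mathcal{X}} \left\|\Phi_L(\mathbf{x}^{\{0\}}) - \Phi_S(\mathbf{x}^{\{0\}})\right\|$. Then I would substitute the identity from Proposition \ref{proposition_1} inside the supremum to rewrite this as $\sup_{\mathbf{x}^{\{0\}} \in \mathcal{X}} \left\|\Phi_M(\mathbf{x}^{\{0\}})\right\| = \sup_{\mathbf{x}^{\{0\}} \in \mathcal{X}} \left\|\mathbf{y}_M^{\{L+1\}}\right\|$. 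The final step is to recognize that ranging over $\mathbf{x}^{\{0\}} \in \mathcal{X}$ and taking $\mathbf{y}_M^{\{L+1\}} = \Phi_M(\mathbf{x}^{\{0\}})$ is, by Definition \ref{def1} applied to $\mathcal{N}_M$, exactly ranging $\mathbf{y}_M^{\{L+1\}}$ over the output reachable set $\mathcal{Y}_M$; hence $\sup_{\mathbf{x}^{\{0\}} \in \mathcal{X}} \left\|\Phi_M(\mathbf{x}^{\{0\}})\right\| = \sup_{\mathbf{y}_M^{\{L+1\}} \in \mathcal{Y}_M} \left\|\mathbf{y}_M^{\{L+1\}}\right\|$, which is (\ref{distance_reach}).

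There is essentially no hard analytic step here — the proposition is a direct corollary of Proposition \ref{proposition_1} together with unwinding the definitions of the error metric and the reachable set. The only point requiring a word of care is the passage from a supremum over inputs $\mathbf{x}^{\{0\}}$ to a supremum over the image set $\mathcal{Y}_M$: one should note that the map $\mathbf{x}^{\{0\}} \mapsto \Phi_M(\mathbf{x}^{\{0\}})$ is surjective onto $\mathcal{Y}_M$ by construction of $\mathcal{Y}_M$ in (\ref{reachset}), so the two suprema are taken over the same set of values of $\left\|\cdot\right\|$ and therefore coincide (this holds regardless of whether the supremum is attained). I would also briefly remark that Assumption \ref{assumption_1} guarantees the merged network $\mathcal{N}_M$ is well defined with matching input/output dimensions, so $\Phi_L(\mathbf{x}^{\{0\}}) - \Phi_S(\mathbf{x}^{\{0\}})$ and hence $\left\|\mathbf{y}_M^{\{L+1\}}\right\|$ make sense. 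Assembling these observations gives the claim in a few lines.
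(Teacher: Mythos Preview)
Your proposal is correct and follows essentially the same approach as the paper's own proof, which simply notes that the result is obtained straightforwardly from Proposition~\ref{proposition_1}, i.e., from $\mathbf{y}_M^{\{L+1\}} = \mathbf{y}_L^{\{L\}} - \mathbf{y}_S^{\{S\}}$. Your version is more explicit about the change of index set in the supremum (via the surjectivity of $\Phi_M$ onto $\mathcal{Y}_M$ from Definition~\ref{def1}), but the underlying idea is identical.
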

\begin{proof} The result can be obtained straightforwardly from the result in  Proposition \ref{proposition_1}, i.e., $\mathbf{y}_M^{\{L+1\}} = \mathbf{y}_L^{\{L\}} - \mathbf{y}_S^{\{S\}}$. The proof is complete.
\end{proof}

As shown in Proposition \ref{proposition_2}, the key of computing $d(\mathcal{N}_L,\mathcal{N}_S)$ is to compute the output reachable set $\mathcal{Y}_M$. For instance, as in NNV neural network reachability analysis tool, the reachable sets are in the form of a family of polyhedral sets \cite{tran2020nnv}, and in IGNNV tool, the output reachable set is a family of interval sets \cite{xiang2018output,xiang2020reachable}. With the reachable set $\mathcal{Y}_M$, the approximate bisimulation error $d(\mathcal{N}_L,\mathcal{N}_S)$ can be easily obtained by searching for the maximal value of $\left\|\mathbf{y}_M^{\{L+1\}}\right\| $ in $\mathcal{Y}_M$, e.g., testing throughout a finite number of vertices in polyhedral sets.

\section{Application to Assured Neural Network Compression}


\subsection{Assured Neural Network Compression}

In practical applications, neural networks are usually large in size, and it could be computationally expensive and time-consuming to perform those tasks requiring a large amount of computation resources. A promising method to mitigate the computation burden is to compress large-scale neural networks into small-scale ones and provide the approximate bisimulation error between two neural networks. With the approximate bisimulation error, we can infer the outputs of the original large-scale neural network via running its corresponding small-scale compressed one plus the approximate bisimulation error. The assured neural network compression is stated as below.
\begin{definition}
Given a large-scale neural network $\mathcal{N}_L$ with input set $\mathcal{X}$, a small-scale neural network $\mathcal{N}_S$ is called its assured compressed version with precision $\varepsilon$ if the approximate bisimulation error of two neural networks are not greater than  $\varepsilon$, i.e., 
\begin{align}
    d(\mathcal{N}_L,\mathcal{N}_S) \le \varepsilon 
\end{align}
where $\varepsilon \ge 0$.
\end{definition}
\begin{remark}
There exist a number of neural network compression methods \cite{deng2020model} to obtain small-scale neural network $\mathcal{N}_S$. In this paper, our focus is on how to compute the assured neural network compression precision $\varepsilon$ using the framework of approximate bisimulation relations proposed in the previous sections.  
\end{remark}
\begin{example}
We verify the effectiveness of the approximate bisimulation approach in neural network compression by a numerical case. In the numerical case, we aim to soundly simulate a neural network $\mathcal{N}_L$ (large-scale) with 5 hidden layers and 50 neurons in each hidden layer using a neural network $\mathcal{N}_S$ (small-scale) with 2 hidden layers and 10 neurons in each hidden layer. To facilitate the visualization of the simulation results, the output outputs of both neural networks are selected one-dimensional.

First, a neural network $\mathcal{N}_L$ is randomly generated, and then a neural network $\mathcal{N}_S$ is trained out of the input-output data of $\mathcal{N}_L$. All activation functions are ReLU functions. Using the merged neural network method and computing reachable set with NNV tool, the approximate bisimulation error $\varepsilon = 26.1227$ of the two neural networks can be obtained. With the help of $\varepsilon = 26.1227$, the upper and lower bounds of output $\mathbf{y}_L^{\{L\}}$ of $\mathcal{N}_L$ can be obtained via the outputs $\mathbf{y}_S^{\{S\}}$  of $\mathcal{N}_S$ with a smaller size, i.e., upper bound $\overline{\mathbf{y}}_L^{\{L\}}=\mathbf{y}_S^{\{S\}}+\varepsilon$ and lower bound $\underline{\mathbf{y}}_L^{\{L\}}=\mathbf{y}_S^{\{S\}}-\varepsilon$.

\begin{figure}[t!]
\centering
	\includegraphics[width=8.5cm]{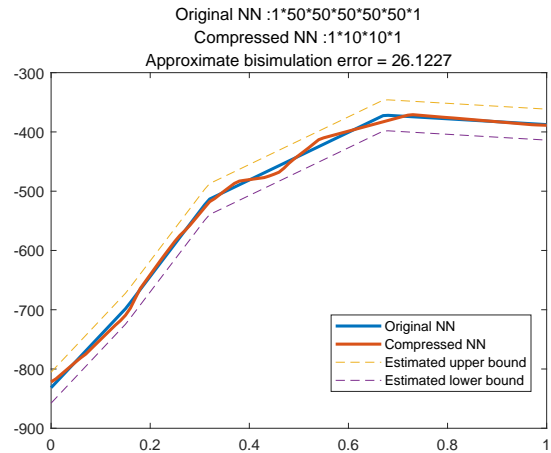}
	\caption{Assured compression for a random neural network (from $50 \times 50 \times 50 \times 50 \times 50$ to $10 \times 10$) by approximate bisimulation approach.}
	\label{numerical} 
\end{figure}

Output data of the original neural network and the compressed neural network, as well as the upper and lower bounds, are represented in Fig. \ref{numerical}. It can be observed that all the outputs $\mathbf{y}_L^{\{L\}}$ are within the upper bound $\overline{\mathbf{y}}_L^{\{L\}}$ and lower bound $\underline{\mathbf{y}}_L^{\{L\}}$, i.e., $\underline{\mathbf{y}}_L^{\{L\}}\le \mathbf{y}_L^{\{L\}} \le \overline{\mathbf{y}}_L^{\{L\}}$. 

\end{example}

\subsection{Application of ACAS Xu Network Verification}

\begin{figure}[b!]\label{acas_input_fig}
\centering
	\includegraphics[width=5cm]{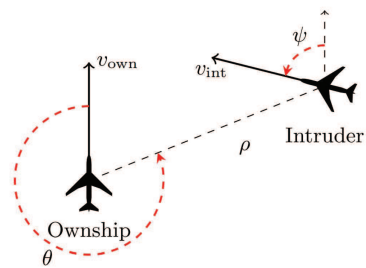}
	\caption{ACAS Xu horizontal logic table illustration \cite{katz2017reluplex}}
	\label{acas_input} 
\end{figure}

In this subsection, we apply the neural network model compression method to ACAS Xu network in \cite{owen2019acas} to accelerate the verification processes. ACAS Xu system has been developed using a large lookup table that maps sensor measurements to warning signals, see Fig. \ref{acas_input}. It has been shown that DNNs can significantly reduce memory (replacing a 2GB lookup table with an efficient DNNs of less than 3MB). 
The DNN method of ACAS Xu system consists of 45 DNNs, and each neural network contains 5 inputs and 5 outputs, with 6 hidden layers and 50 neurons with ReLU activation functions in each layer.

In practical applications, calculating the exact output reachable set of a neural network with 6 hidden layers and 50 neurons per layer requires huge computational effort and computational time \cite{katz2017reluplex}. Therefore, we compress the original neural networks into smaller neural networks and compute the assured precision by the approximate bisimulation method. Then, we can perform verification of properties based on those reduced-scale neural networks and approximate bisimulation error $\varepsilon$, i.e., expand the unsafe region $\neg \mathcal{S}$ in Definition \ref{def2}  by the approximate bismulation error $\varepsilon$.   

In this example, we use neural networks with two hidden layers and 10 neurons in each layer as the compressed version for the compression of the DNNs of the ACAS Xu system. Then, we verify Property $\phi_3$ on 27 neural networks in the ACAS Xu system using their assured compressed versions. The verification results and computational time are listed in Table \ref{ACAS_safety}. In Table \ref{ACAS_safety}, $\varepsilon$ is the approximate bisimulation error. $T_L$ is the verification time (seconds) using original neural networks and $T_S$ is the verification time (seconds) using compressed neural networks. $V_L$ is the verification results on original neural networks, and $V_S$ is the verification results on compressed neural networks.

As explicitly shown in Table \ref{ACAS_safety}, the verification time can be significantly reduced using compressed neural networks. It is worth mentioning that since the approximate bisimulation error is an over-approximation of the exact difference between the outputs of two neural networks, the safety conclusions based on compressed networks are only able to derive safe conclusions for original networks in safe cases. As to uncertain cases, we have to perform verification on original neural networks to ascertain the safety property. It can be found that the safety of 18 of the compressed neural networks can be used to conclude the safety of original neural networks. The remaining 9 unsafe verification results based on compressed neural networks are insufficient to derive safe or unsafe conclusions of original neural networks. This is mainly because the approximate bisimulation error is too large to meet the accuracy of the safety verification. Despite the 9 uncertain cases that need to be verified through original neural networks, the total verification time has been significantly reduced for these 27 neural networks.

\begin{table}[t!]
	\centering
	\caption{Property $\phi_3$ Verification for ACAS Xu System  }\label{ACAS_safety}
    \begin{tabular}{|c>{\columncolor[gray]{0.8}}cc>{\columncolor[gray]{0.8}}cc>{\columncolor[gray]{0.8}}c|}
    \hline 
ID  & $\varepsilon$ & $T_L (s)$ & $T_S (s)$ & $V_L$ & $V_S$ \\
 \hline\hline 
$N_{11}$ & 0.0927             & 463.24804        & 0.19383           & Safe      & \cellcolor{red!50} Uncertain     \\
$N_{12}$ & 0.089              & 504.08039        & 0.26257           & Safe      & \cellcolor{red!50} Uncertain     \\
$N_{13}$ & 0.0369             & 185.89549        & 0.66355           & Safe      & \cellcolor{red!50} Uncertain     \\
$N_{14}$ & 0.0041             & 29.31453         & 0.34665           & Safe      & Safe      \\
$N_{15}$ & 0.0026             & 45.7813          & 0.41446           & Safe      & Safe      \\
$N_{16}$ & 0.0013             & 12.17051         & 0.2766            & Safe      & Safe      \\
$N_{17}$ & 0.0018	          & 3.22305	         & 0.74309           & Unsafe     & \cellcolor{red!50} Uncertain     \\
$N_{18}$ & 0.0067	   	      & 2.53016	         & 0.50254           & Unsafe     & \cellcolor{red!50} Uncertain     \\
$N_{19}$ & 0.0056	          & 3.33068	         & 0.50024           & Unsafe     & \cellcolor{red!50} Uncertain     \\
$N_{21}$ & 0.1838             & 151.38468        & 1.2967            & Safe      & \cellcolor{red!50} Uncertain    \\
$N_{22}$ & 0.1143             & 56.81178         & 0.87974           & Safe      & \cellcolor{red!50} Uncertain     \\
$N_{23}$ & 0.018              & 92.08281         & 0.66704           & Safe      & Safe      \\
$N_{24}$ & 0.0035             & 3.14713          & 0.30876           & Safe      & Safe      \\
$N_{25}$ & 0.0031             & 19.24327         & 0.42653           & Safe      & Safe      \\
$N_{26}$ & 0.0161             & 2.77801          & 0.2835            & Safe      & Safe      \\
$N_{27}$ & 0.0047             & 9.83793          & 0.35039           & Safe      & Safe      \\
$N_{28}$ & 0.0063             & 2.87251          & 0.39635           & Safe      & Safe      \\
$N_{29}$ & 0.0022             & 1.51099          & 0.23274           & Safe      & Safe      \\
$N_{31}$ & 0.0244             & 63.11602         & 1.19615           & Safe      & Safe      \\
$N_{32}$ & 0.0907             & 421.81782        & 0.86584           & Safe      & \cellcolor{red!50} Uncertain     \\
$N_{33}$ & 0.0254             & 94.0685          & 0.19859           & Safe      & Safe      \\
$N_{34}$ & 0.0055             & 24.4508          & 0.38036           & Safe      & Safe      \\
$N_{35}$ & 0.002              & 8.88554          & 0.20696           & Safe      & Safe      \\
$N_{36}$ & 0.0135             & 18.18405         & 0.26895           & Safe      & Safe      \\
$N_{37}$ & 0.0136             & 1.25423          & 0.39768           & Safe      & Safe      \\
$N_{38}$ & 0.0061             & 5.36596          & 0.15807           & Safe      & Safe      \\
$N_{39}$ & 0.0055             & 11.92655         & 0.68403           & Safe      & Safe      \\
\hline 
	\end{tabular}
\end{table} 

 
\section{Conclusion}

This work proposed approximate bisimulation relations for feedforward neural networks. The approximate bisimulation relation formally define the maximal difference between the outputs of two bisimular neural networks from same inputs. An reachability-based computation procedure is developed to efficiently compute the approximation error via a novel neural network merging approach. Then, the approximation bismulation approach is applied to assured neural network compression. With the approximate bisimulation error, the perform tasks using the compressed network on behalf of original one such as verification of neural networks, which has been demonstrated by an ACAS Xu example.

\bibliographystyle{ieeetr}
\bibliography{ref}

\end{document}